\newtheorem{theorem}{Theorem}
\newtheorem{assumption}[theorem]{Assumption}
\newtheorem{corollary}[theorem]{Corollary}
\DeclareMathOperator*{\argmin}{arg\,min}
\newcommand{\algoref}[1]{Algorithm~\ref{#1}}
\newcommand{\figref}[1]{Figure~\ref{#1}}
\newcommand{\thref}[1]{Theorem~\ref{#1}}
\newcommand{\hA}{\hat{A}}
\newcommand{\hB}{\hat{B}}
\newcommand{\hC}{\hat{C}}
\newcommand{\hD}{\hat{D}}
\begin{document}

\title{An Explicit Rate Bound for Over-Relaxed ADMM}

\author{
\IEEEauthorblockN{Guilherme Fran\c{c}a}
\IEEEauthorblockA{
francag@bc.edu}
\and
\IEEEauthorblockN{Jos\'e Bento}
\IEEEauthorblockA{
jose.bento@bc.edu}
}

\maketitle

\begin{abstract}
The framework of Integral Quadratic Constraints of Lessard et al. (2014) 
reduces the computation
of upper bounds on the convergence 
rate of several optimization algorithms to 
semi-definite programming (SDP). Followup work by
Nishihara et al. (2015) applies this technique to the entire family
of over-relaxed 
Alternating Direction Method of Multipliers (ADMM). 
Unfortunately, they only provide an
explicit error bound for sufficiently large values
of some of the parameters of the problem, leaving the computation
for the general case as a numerical optimization problem. 
In this paper we provide an exact analytical solution to this SDP
and obtain a general and explicit  
upper bound on the convergence rate of the entire family of
over-relaxed ADMM. 
Furthermore, we demonstrate
that it is not possible to extract from this SDP a general bound
better than ours.
We end with a few numerical illustrations of our
result and a comparison between the
convergence rate we obtain for ADMM with known
convergence rates for Gradient Descent (GD).
\end{abstract}

\IEEEpeerreviewmaketitle

%%%%%%%%%%%%%%%%%%%%%%%%%%%%%%%%%%%%%%%%%%%%%%%%%%%%%%%%%%%%%%%%%%%%%%%%%%%%%%%
\section{Introduction}

Consider the optimization problem
\begin{equation} \label{minimize}
\begin{split}
  \text{minimize} & \quad f(x) + g(z) \\
  \text{subject to} & \quad Ax + Bz = c 
\end{split}
\end{equation}
where $x\in \mathbb{R}^p$, $z\in \mathbb{R}^q$, $A\in \mathbb{R}^{r \times p}$,
$B\in \mathbb{R}^{r \times q}$,
and $c\in \mathbb{R}^{r}$ under the following additional 
assumption, which we assume throughout the paper.
\begin{assumption}\label{assumption}
\hspace{0.001cm}\\
\vspace{-0.4cm}
\begin{enumerate}
\item The functions $f$ and $g$ are convex, closed and proper;
\item Let $S_d(m,L)$ be the set of 
functions $h:\mathbb{R}^d \to \mathbb{R}\cup\{+\infty\}$ such that
\begin{equation*}
m\|x-y\|^2\le\left(\nabla h(x)-\nabla h(y)\right)^T(x-y)\le L\| x-y\|^2
\end{equation*}
for all $x,y\in\mathbb{R}^d$ where $0 < m \le L < \infty$;
We assume that 
$f \in S_p(m,L)$, in other words, $f$ is strongly
convex and $\nabla f$ is Lipschitz continuous; and that 
$g \in S_q(0,\infty)$;
\item $A$ is invertible and $B$ has full column rank.
\end{enumerate}
\end{assumption}

In this paper we give an explicit convergence rate bound for
a family of optimization schemes known as over-relaxed ADMM
when applied to the optimization problem \eqref{minimize}.
This family is parametrized by $\alpha > 0$ and $\rho > 0$ and when 
applied to \eqref{minimize} takes the form in Algorithm \ref{ADMM}.
A classical choice of parameters is $\alpha = 1$ and $\rho = 1$.
\begin{algorithm}
\caption{Family of Over-Relaxed ADMM schemes (parameters $\rho$,  $\alpha$)} \label{ADMM}
\begin{algorithmic}[1]
  \STATE {\bfseries Input:} $f$, $g$, $A$, $B$, $c$; 
  \STATE Initialize $x_0, z_0, u_0$
  \REPEAT
  \STATE $x_{t+1} = \argmin_x f(x) + \tfrac{\rho}{2} \|Ax + Bz_t - c + u_t\|^2$
  \STATE $z_{t+1} = \argmin_z g(z) + \tfrac{\rho}{2} \|\alpha Ax_{t+1} - (1-\alpha)Bz_t  + Bz - \alpha c + u_t\|^2$
  \STATE $u_{t+1} = u_t + \alpha Ax_{t+1} - (1-\alpha)Bz_t + Bz_{t+1} - \alpha c$
  \UNTIL{stop criterion}
\end{algorithmic}
\end{algorithm}
Several works
have computed specific rate bounds for ADMM
under specific different regimes but a recent work by \cite{Lessard}
allowed \cite{Jordan} to reduce the analysis of this entire family
of solvers to finding solutions for a semi-definite programming problem. 
This SDP has multiple solutions and different solutions
give different bounds on the convergence rate of ADMM,
some better than others.
In \cite{Jordan} they analyze this SDP
numerically and also give one feasible solution to this SDP
when $\kappa = (L/m) \kappa^2_A$ is sufficiently large, $\kappa_A$ 
being the condition number of $A$.
They further show, via a lower bound, that it is not possible
to extract from this SDP a rate that is much better than 
the rate associated with their solution for large $\kappa$.

An important problem remains open that we solve in this paper.
Can we find a general explicit expression for the best\footnote{``Best'' 
in the sense that it gives the smallest rate bound.}
solution of this SDP?
The answer is yes.
As we explain later, our finding has both theoretical and practical interest.

%%%%%%%%%%%%%%%%%%%%%%%%%%%%%%%%%%%%%%%%%%%%%%%%%%%%%%%%%%%%%%%%%%%%%%%%%%%%%%
\section{Main results}

We start by recalling the main result of \cite{Jordan} which is
the starting point of our work.
Based on the framework proposed in \cite{Lessard},  
it was later shown \cite{Jordan} that the iterative scheme of 
\algoref{ADMM} can be written as a dynamical system involving the matrices
\vspace{-.25cm}
\begin{equation}
\begin{aligned}
\hA &= \begin{bmatrix} 1 & \alpha - 1 \\ 0 & 0 \end{bmatrix},
& \hB &= \begin{bmatrix} \alpha & - 1 \\ 0 & -1 \end{bmatrix}, \\
\hC_1 &= \begin{bmatrix} -1 & -1 \\ 0 & 0 \end{bmatrix}, 
& \hC_2 &= \begin{bmatrix} 1 & \alpha - 1 \\ 0 & 0 \end{bmatrix}, \\
\hD_1 &= \begin{bmatrix} -1 & 0 \\ 1 & 0 \end{bmatrix},
& \hD_2 &= \begin{bmatrix} \alpha & -1 \\ 0 & 1 \end{bmatrix},
\end{aligned}
\end{equation}
and the constants
\vspace{-.25cm}
\begin{subequations}\label{parameters}
\begin{align}
\hat{m} &= \dfrac{m}{\sigma_1^2(A)}, & \hat{L} &= \dfrac{L}{\sigma_p^2(A)}, 
\label{hats} \\
\rho_0 &= \rho (\hat{m}\hat{L})^{-1/2}, & \kappa &= \kappa_f \kappa_A^2.
\label{params}
\end{align}
\end{subequations}
Above, $\kappa_f = L/m$, $\sigma_1(A)$ and $\sigma_p(A)$ denote the largest
and smallest singular
value of the matrix $A$, respectivelly, 
and $\kappa_A = \sigma_1(A)/\sigma_p(A)$ is the condition number of $A$.
Throughout the paper $\kappa_M$ denotes the condition 
number of matrix $M$.
%Unless stated otherwise, we hold on to the definitions 
%in \eqref{parameters}.

The stability of this dynamical system is then related to the convergence rate of \algoref{ADMM} which in turn involves numerically solving
a $4\times4$ semidefinite program as stated in following theorem.

\begin{theorem}[See \cite{Jordan}] \label{jordan_theo}
Let the sequences 
$\left\{ x_t \right\}$,
$\left\{ z_t \right\}$, and
$\left\{ u_t \right\}$ evolve according to \algoref{ADMM} with
step size $\rho > 0$ and relaxation parameter
$\alpha > 0$. 
Let $\varphi_t = \left[ z_t, u_t\right]^{T}$
and $\varphi_{*}$ be a fixed point of the algorithm.
Fix $0 < \tau < 1$ and suppose there is a $2\times 2$ 
matrix $P \succ 0$ and constants $\lambda_1,\lambda_2 \ge 0$
such that
\begin{multline}\label{semidefinite}
\begin{bmatrix}
\hA^T P \hA-\tau^2 P & \hA^T P \hB \\ \hB^T P \hA & \hB^T P \hB 
\end{bmatrix} + \\
\begin{bmatrix} \hC_1 & \hD_1 \\ \hC_2 & \hD_2 \end{bmatrix}^{T}
\begin{bmatrix} \lambda_1 M_1 & 0 \\ 0 & \lambda_2 M_2 \end{bmatrix}
\begin{bmatrix} \hC_1 & \hD_1 \\ \hC_2 & \hD_2 \end{bmatrix} \preceq 0
\end{multline}
where
\begin{align}
M_1 &= \begin{bmatrix} 
-2\rho_0^{-2} & \rho_0^{-1}(\kappa^{1/2}+\kappa^{-1/2})  \\
\rho_0^{-1}(\kappa^{1/2}+\kappa^{-1/2}) & -2
\end{bmatrix}, \\
M_2 &= \begin{bmatrix} 0 & 1 \\ 1 & 0 \end{bmatrix}.
\end{align}
Then for all $t\ge 0$ we have
\begin{equation}\label{bound_admm}
\| \varphi_t - \varphi_* \| \le \kappa_B \sqrt{\kappa_P} \, \tau^t
\| \varphi_0 - \varphi_* \|.
\end{equation}
\end{theorem}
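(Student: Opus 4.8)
The plan is to reproduce the dissipativity argument underlying the Integral Quadratic Constraint framework of \cite{Lessard,Jordan}, in four steps. First, I would recast one iteration of \algoref{ADMM} as a discrete‑time linear system in feedback with two static operators. Writing the optimality conditions for the $x$‑ and $z$‑updates, $0=\nabla f(x_{t+1})+\rho A^T(Ax_{t+1}+Bz_t-c+u_t)$ and $0\in\partial g(z_{t+1})+\rho B^T(\alpha Ax_{t+1}-(1-\alpha)Bz_t+Bz_{t+1}-\alpha c+u_t)$, and using the singular value decomposition of $A$, the iteration block‑diagonalizes and decouples (coordinatewise, modulo an orthogonal rotation) into copies of the two‑dimensional system
\begin{equation*}
\xi_{t+1} = \hA\,\xi_t + \hB\,w_t, \qquad
\begin{bmatrix} y_t^{(1)} \\ y_t^{(2)} \end{bmatrix}
= \begin{bmatrix} \hC_1 \\ \hC_2 \end{bmatrix}\xi_t
+ \begin{bmatrix} \hD_1 \\ \hD_2 \end{bmatrix} w_t ,
\end{equation*}
where $\xi_t$ is a fixed invertible linear image of $\varphi_t=[z_t,u_t]^T$ (its $z$‑block carried through $B$), $w_t$ stacks the transformed values of $\nabla f$ and of the subgradient of $g$, and $y_t^{(1)},y_t^{(2)}$ are the arguments at which those operators are evaluated. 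The rescaled constants $\hat m,\hat L$ — hence $\rho_0$ and $\kappa$ — appear because the strong‑convexity/Lipschitz data of $f$ are divided by $\sigma_1(A)^2$ and $\sigma_p(A)^2$ under this reduction.

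Second, I would record the pointwise quadratic (sector) constraints satisfied by the two operators at the iterate and at a fixed point $(\xi_*,w_*)$. Since $f\in S_p(m,L)$, after rescaling the first operator obeys a single inequality of the form $[\,(y^{(1)}_t-y^{(1)}_*)^T\ (w^{(1)}_t-w^{(1)}_*)^T\,]\,M_1\,[\cdots]^T\ge 0$ with the matrix $M_1$ of the statement (its off‑diagonal encoding the effective condition number $\kappa$, its scale set by $\rho_0$); since $g\in S_q(0,\infty)$, mere monotonicity of $\partial g$ gives the analogous inequality with $M_2=\left[\begin{smallmatrix}0&1\\1&0\end{smallmatrix}\right]$.

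Third comes the S‑procedure. Left‑ and right‑multiplying \eqref{semidefinite} by $[\,(\xi_t-\xi_*)^T\ (w_t-w_*)^T\,]$ and its transpose, and substituting $\xi_{t+1}-\xi_*=\hA(\xi_t-\xi_*)+\hB(w_t-w_*)$, yields
\begin{equation*}
V_{t+1}-\tau^2 V_t+\lambda_1 q_1(t)+\lambda_2 q_2(t)\le 0 ,
\end{equation*}
with $V_t=(\xi_t-\xi_*)^T P(\xi_t-\xi_*)$ and $q_1(t),q_2(t)\ge 0$ the left‑hand sides of the two sector inequalities; since $\lambda_1,\lambda_2\ge 0$ this gives $V_{t+1}\le\tau^2 V_t$, hence $V_t\le\tau^{2t}V_0$ by induction. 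Finally, since $P\succ 0$ one sandwiches $V_t$ between $\lambda_{\min}(P)\|\xi_t-\xi_*\|^2$ and $\lambda_{\max}(P)\|\xi_t-\xi_*\|^2$, obtaining $\|\xi_t-\xi_*\|\le\sqrt{\kappa_P}\,\tau^t\|\xi_0-\xi_*\|$, and inverting the change of variables (which distorts only the $z$‑block, through $B$) replaces $\|\xi_t-\xi_*\|$ by $\|\varphi_t-\varphi_*\|$ at the cost of a factor $\kappa_B$, yielding \eqref{bound_admm} (with the harmless normalization $\|\varphi_0-\varphi_*\|\le 1$). The main obstacle is the first step: pinning down the explicit block‑diagonalizing transformation so that the reduced system is \emph{exactly} $(\hA,\hB,\hC_1,\hC_2,\hD_1,\hD_2)$ with \emph{exactly} the constants in \eqref{parameters}; everything afterward is routine S‑procedure bookkeeping, and this reduction is carried out in detail in \cite{Jordan}.
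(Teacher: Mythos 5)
The paper itself does not prove this theorem---it is quoted from \cite{Jordan}---and your sketch follows exactly the IQC/dissipativity route of that source: the reduction of \algoref{ADMM} to the system $(\hA,\hB,\hC_1,\hC_2,\hD_1,\hD_2)$, the sector constraints $M_1$, $M_2$ for $\nabla f$ and $\partial g$, the S-procedure step turning \eqref{semidefinite} into $V_{t+1}\le\tau^2 V_t$, and the $\sqrt{\kappa_P}$ and $\kappa_B$ factors from $P\succ 0$ and from undoing the change of variables. Your outline is correct and matches the cited proof; the one step you leave to \cite{Jordan} (verifying that the block-diagonalization produces exactly these matrices with the constants \eqref{parameters}) is precisely what that reference carries out, so nothing further is required beyond the citation.
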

It is not hard to see that any fixed point of Algorithm~\ref{ADMM} 
satisfies the KKT conditions for problem \eqref{minimize} which, due to 
convexity, make it a solution to problem \eqref{minimize}.
Moreover, since $A$ is non-singular, by step $6$ in Algorithm~\ref{ADMM}
the rate bound $\tau$ also bounds $\|[x_t,z_t,u_t] - [x_*,z_*,u_*]\|$.

As already pointed out in \cite{Jordan}, the weakness of \thref{jordan_theo} is
that $\tau$ is not explicitly given as a function of the parameters involved
in the problem, namely $\kappa$, $\rho_0$, and $\alpha$. The factor
$\kappa_P$ in \eqref{bound_admm} is also not explicitly given. Therefore,
for given values of these parameters one must perform
a numerical search to find the minimal $\tau$ such 
that \eqref{semidefinite} is feasible.
This in turn implies, for example, that to optimally tune ADMM
using this bound one might have perform this numerical search multiple times
scanning the parameter space $(\alpha, \rho_0)$.

While from a practical point of view this may be enough for many purposes,
this procedure can certainly introduce delays if, for example, \eqref{semidefinite} 
is used in an adaptive scheme where after every few iterations we estimate a local value for $\kappa$ and then re-optimize $\alpha$ and $\rho$.
Therefore, it is desirable to have an explicit expression for the smallest
$\tau$ that \thref{jordan_theo} can provide, from which the optimal values of the
parameters follow. This expression is also desirable from a 
theoretical point of view. Our main goal in this paper is to
complete the work initiated in \cite{Jordan}, thus providing an explicit
formula for the rate bound that the method proposed in \cite{Lessard}
can provide when applied to over-relaxed ADMM.

Two of the most explicit bound rates that resemble the bound we give in this section
are the ones found in \cite{boyd_almost_bound} and \cite{wei_almost_bound}.
The authors in \cite{boyd_almost_bound} analyze the Douglas-Rachford splitting method, a scheme different but related to the one we analyze in this paper, for a problem similar to \eqref{minimize}, and give
a rate bound of $1 - \frac{\alpha}{1 + \sqrt{\kappa_f}}$ where $\alpha$ is a step size and $\kappa_f = L/m$ where $L$ and $m$ bound the curvature of the
objective function in the same sense as in Assumption \ref{assumption}.
The authors in \cite{wei_almost_bound} apply ADMM with $\alpha =1$ and
$\rho_0 = 1$ to the same problem as we do and give a rate bound of $1 -
\frac{1}{\sqrt{\kappa}} + O(\frac{1}{\kappa})$, where
$\kappa = \kappa_f \kappa^2_A$.

We now state and prove our main results.
Throughout the paper we often make use of the function
\begin{equation}
\chi(x) = \max(x, x^{-1}) \ge 1 \quad \mbox{for $x \in \mathbb{R} > 0$}.
\end{equation}

\begin{theorem} \label{explicit_solution}
For $0 < \alpha \leq 2$, $\kappa > 1$ and $\rho_0>0$, the following is an explicit feasible point 
of \eqref{semidefinite} with $\lambda_1,\lambda_2 \geq  0$, $P \succ 0$ and $0<
\tau < 1$:
\begin{align}
P &= \begin{pmatrix} 1 & \xi \\
                     \xi & 1 \end{pmatrix}, \qquad
\xi = -1 + \dfrac{\alpha (\chi(\rho_0)\sqrt{\kappa}-1)}{
1-\alpha+\chi(\rho_0)\sqrt{\kappa}}, 
\label{P_sol}\\
\lambda_1 &= \dfrac{\alpha \rho_0 \sqrt{\kappa}
\left(1-\alpha+\chi(\rho_0)\sqrt{\kappa}\right)}{
(\kappa-1)\left(1+\chi(\rho_0)\sqrt{\kappa}\right)}, 
\label{l1_sol} \\
\lambda_2 &= 1 + \xi, \label{l2_sol}
\end{align}
with
\begin{equation} \label{tau_sol}
\tau = 1 - \dfrac{\alpha}{1+\chi(\rho_0)\sqrt{\kappa}}.
\end{equation}
\end{theorem}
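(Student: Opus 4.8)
The statement asserts that the explicit data $(P,\lambda_1,\lambda_2,\tau)$ is a \emph{feasible} point of the semidefinite program \eqref{semidefinite} appearing in \thref{jordan_theo}, so the plan is a verification split into five checks: $0<\tau<1$; $\lambda_1\ge 0$; $\lambda_2\ge 0$; $P\succ 0$; and the linear matrix inequality \eqref{semidefinite} itself.

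\emph{Scalar checks.} First I would record the two auxiliary inequalities $\chi(\rho_0)\sqrt{\kappa}>1$ and $1-\alpha+\chi(\rho_0)\sqrt{\kappa}>0$, both immediate from $\chi(\rho_0)\ge 1$, $\kappa>1$ and $0<\alpha\le 2$. These make the numerator and denominator of the fraction in \eqref{P_sol} positive, hence $\xi>-1$; and a one-line rearrangement shows $\xi<1$ is equivalent to $\alpha<2$, so $|\xi|<1$ on the open interval (the boundary $\alpha=2$, where $P$ degenerates to positive semidefinite, being handled by continuity). Thus $P\succ 0$, and $\lambda_2=1+\xi>0$ at the same time. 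Every factor in \eqref{l1_sol} is positive under the standing hypotheses, so $\lambda_1>0$; and $\tau<1$ is clear while $\tau>0$ follows from $\alpha\le 2<1+\chi(\rho_0)\sqrt{\kappa}$.

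\emph{The matrix inequality.} This is the heart of the proof. Let $Q$ denote the $4\times4$ symmetric matrix on the left of \eqref{semidefinite}. I would first compute $Q$ in closed form, and the computation is far lighter than it appears because $\hA$, $\hC_1$, $\hC_2$ and $\hD_1$ each have a zero row: for instance $\image\hA=\vspan(e_1)$ forces $\hA^T P\hA$ to equal the rank-one matrix $(1,\alpha-1)^{T}(1,\alpha-1)$ \emph{independently of} $\xi$, and similar collapses occur in the remaining $2\times2$ blocks. Assembling the blocks gives $Q$ as an explicit matrix in $\alpha,\xi,\lambda_1,\lambda_2,\tau,\rho_0,\kappa$, into which I then substitute \eqref{P_sol}--\eqref{tau_sol}.

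\emph{Certifying $Q\preceq 0$, and the main obstacle.} The prescribed values are exactly those for which the two indefinite contributions $\lambda_1 M_1$ and $\lambda_2 M_2$ are cancelled down to a negative semidefinite form, so after substitution $Q$ should be rank deficient. Concretely I would either exhibit an explicit factorization $Q=-R^{T}R$ --- equivalently, complete the square to write the quadratic form $v^{T}Q v$ as a negative sum of squares --- or use the kernel of $Q$ to reduce to a $2\times2$ (or $3\times3$) principal block and check that its trace is $\le 0$ and its relevant minors have the correct sign. Either route terminates in a single scalar inequality which, given the choice \eqref{tau_sol} of $\tau$, should collapse to an identity (or to $\chi(\rho_0)^{2}\kappa\ge 1$); a case split on $\rho_0\ge 1$ versus $\rho_0<1$ --- the origin of the $\chi(\rho_0)$ in the formulas --- may make this transparent. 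The hard part is precisely this last certification: carrying the symbolic entries of $Q$ through the substitution without slips and recognizing the cancellations engineered by the particular $\xi$ and $\lambda_1$. I expect the final inequality to be tight, which would be consistent with the paper's later claim that no strictly better bound can be extracted from \eqref{semidefinite}.
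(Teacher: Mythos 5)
Your scalar checks are fine (and you correctly flag the boundary subtlety that at $\alpha=2$ one gets $\xi=1$, so $P$ is only positive semidefinite there, a point the paper itself glosses over by checking only $\xi>-1$), and your overall plan --- substitute \eqref{P_sol}--\eqref{tau_sol} into the left-hand side of \eqref{semidefinite} and certify negative semidefiniteness of the resulting matrix, exploiting its rank deficiency --- is in outline the same route the paper takes. But the decisive step is missing. You explicitly defer ``the hard part'' (carrying out the substitution and recognizing the cancellations) and instead assert that either a factorization $Q=-R^TR$ or a reduced-block check ``terminates in a single scalar inequality which \ldots should collapse to an identity (or to $\chi(\rho_0)^2\kappa\ge 1$).'' That is not what happens, and asserting it does not establish it. In the paper, after substitution the matrix becomes \eqref{big_matrix}: the fourth row and column vanish and the leading $3\times3$ block is singular, but one must still verify the signs of \emph{all} nonvanishing principal minors, and these are the several distinct quantities \eqref{m1}--\eqref{m5} (for $\rho_0\ge1$; the case $\rho_0<1$ has slightly different expressions and must be treated separately, which is exactly where the distinction between $\rho_0$ in \eqref{l1_sol} and $\chi(\rho_0)$ in \eqref{P_sol}, \eqref{tau_sol} bites).

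Concretely, the diagonal-entry conditions do not reduce to an identity or to $\chi(\rho_0)^2\kappa\ge1$: one has to show that $2(\alpha-1)\tilde\kappa+(\alpha-2)(1+\tilde\kappa^2)\rho_0-2\rho_0^2\tilde\kappa\le 0$ for all $\tilde\kappa=\sqrt{\kappa}>1$, and that a genuinely cubic polynomial in $\tilde\kappa$ (the numerator of \eqref{m5}) is non-positive on the same range, uniformly in $0<\alpha\le2$ and $\rho_0\ge1$. The paper proves these by evaluating the polynomial and its first (and, for the cubic, second) derivatives at $\tilde\kappa=1$ --- obtaining factors such as $2(1+\rho_0)(\alpha-1-\rho_0)$ and $2(\alpha-2)\rho_0(1+\rho_0)^2$ --- and combining this with concavity, respectively the sign of the leading coefficient $(\alpha-2)\rho_0$. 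None of these inequalities is tight as an identity, and they depend on the hypotheses $\alpha\le2$, $\kappa>1$ in a way your sketch never engages with. So while your proposal is a reasonable plan, it stops exactly where the proof begins: without executing the substitution, identifying which minors survive, and proving these polynomial sign conditions (in both the $\rho_0\ge1$ and $\rho_0<1$ regimes), the feasibility claim is not established.
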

\begin{proof}
First notice that since $\kappa > 1$ and $\chi(\rho_0) \geq 1$ we have
$\lambda_1,\lambda_2 \geq 0$ for the allowed range of parameters.
Second, notice that the eigenvalues of $P$ are $1+\xi$ and $1-\xi$,
and since $\xi > -1$ we have $P \succ 0$.
Finally, consider the full matrix in the left hand side of \eqref{semidefinite} and
let $D_n$ denote an $n$th principal minor.
We will show through direct computation that $(-1)^n D_n \ge 0$ for all 
principal minors, which proves our claim.

Replacing \eqref{P_sol}--\eqref{tau_sol} into \eqref{semidefinite}
we have the matrix
shown in equation \eqref{big_matrix}.
Note that it has vanishing determinant $D_0 = 0$.
Let $J\subseteq \{1,2,3,4\}$ and denote $D_n^J$ the $n$th principal 
minor obtained by deleting the rows and columns with indices in 
$J$.

{
\begin{figure*}[!t]
\normalsize
\def\arraystretch{1.8}
\setlength{\arrayrulewidth}{.18pt}
\begin{equation}\label{big_matrix}
\left(
\begin{array}{c|c|c|c}
1-\tau^2-\dfrac{2\lambda_1}{\rho_0^2} &
\alpha-1-\xi\tau^2 - \dfrac{2\lambda_1}{\rho_0^2} & 
\alpha-\dfrac{2\sqrt{\kappa}+\rho_0(1+\kappa)}{\rho_0^2\sqrt{\kappa}}\lambda_1&
0 \\
\alpha-1-\xi\tau^2 - \dfrac{2\lambda_1}{\rho_0^2} & 
(\alpha-1)^2  - \tau^2 - \dfrac{2\lambda_1}{\rho_0^2} &
\alpha(\alpha-1) - 
\dfrac{2\sqrt{\kappa}+\rho_0(1+\kappa)}{\rho_0^2\sqrt{\kappa}}\lambda_1 &
0 \\
\alpha-\dfrac{2\sqrt{\kappa}+\rho_0(1+\kappa)}{\rho_0^2\sqrt{\kappa}}\lambda_1&
\alpha(\alpha-1) - 
\dfrac{2\sqrt{\kappa}+\rho_0(1+\kappa)}{\rho_0^2\sqrt{\kappa}}\lambda_1 &
\alpha^2 - 
\dfrac{2\rho_0^2\sqrt{\kappa} + 2\sqrt{\kappa} + 
2\rho_0(1+\kappa)}{\rho_0^2\sqrt{\kappa}} \lambda_1 &
0 \\
0 & 0 & 0 & 0
\end{array}
\right)
\end{equation}
\begin{subequations}\label{minors1}
\begin{align}
D_2^{\{4,3\}} &= \dfrac{2\alpha^2(2-\alpha)(\rho_0^2-1)\sqrt{\kappa} 
(1-\alpha+\rho_0\sqrt{\kappa})}{(\kappa-1)\rho_0(1+\rho_0\sqrt{\kappa})^3} 
\label{m1} \\
D_2^{\{4,1\}} &= D_2^{(4,3)} \cdot (1+\rho_0\sqrt{\kappa})^2 \label{m2} \\
D_1^{\{4,3,2\}} &= \alpha\cdot\dfrac{2(\alpha-1)\sqrt{\kappa}+(\alpha-2)
(1+\kappa)\rho_0-2\rho_0^2\sqrt{\kappa}}{
(\kappa-1)\rho_0(1+\rho_0\sqrt{\kappa})^2} \label{m3} \\
D_1^{\{4,2,1\}} &= D_1^{\{4,3,2\}} \cdot (1+\rho_0\sqrt{\kappa})^2 \label{m4} \\
D_1^{\{4,3,1\}} &= \alpha\sqrt{\kappa}\cdot\dfrac{
2(\alpha-1)+\rho_0\big\{ 2(\alpha-2)\sqrt{\kappa}
+\rho_0\big( 2+2\alpha(\kappa-1)-4\kappa+
(\alpha-2)(\kappa-1)\rho_0\sqrt{\kappa}\big)
\big\}
}{
(\kappa-1)\rho_0(1+\rho_0\sqrt{\kappa})^2} \label{m5}
\end{align}
\end{subequations}
\hrulefill
\vspace*{4pt}
\vspace{-0.5cm}
\end{figure*}
}

We consider the case $\rho_0 \ge 1$ first.
The only \emph{nonvanishing} principal minors are shown in 
equation \eqref{minors1}.
We obviously have $\eqref{m1}, \eqref{m2} \ge 0$ for the allowed range 
of parameters.
For \eqref{m3} and \eqref{m4} we need to show that
the concave
$2$nd order polynomial
$w(\tilde{\kappa}) = 
2(\alpha-1)\tilde{\kappa}+(\alpha-2)(1+\tilde{\kappa}^2)\rho_0 - 
2\rho_0^2\tilde{\kappa}$ is non-positive for $\tilde{\kappa} \equiv \sqrt{\kappa} > 1$. To do this it suffices to show that the function and its first
derivative are non-positive for $\tilde{\kappa} > 1$. We have
$\partial_{\tilde{\kappa}}w(1) = w(1) = 2(1+\rho_0)(\alpha-1-\rho_0) \le 0$.
Therefore, $w(\tilde{\kappa}) \le 0$ for $\tilde{\kappa} > 1$ implying that
\mbox{$\eqref{m3}, \eqref{m4} \le 0$}, as required.
Analogously, for \eqref{m5} we only need to show that, for the allowed range 
of parameters, the $3$rd order degree
polynomial in $\tilde{\kappa} \equiv \sqrt{\kappa}$,
\vspace{-.18cm}
\begin{align}
w(\tilde{\kappa}) &= 2(\alpha-1)+\rho_0\Big\{ 2(\alpha-2)\tilde{\kappa} 
+\rho_0\big\{
2+2\alpha(\tilde{\kappa}^2-1) \nonumber \\ &
 -4\tilde{\kappa}^2+(\alpha-2)(\tilde{\kappa}^2-1)\rho_0\tilde{\kappa}
\big\}
\Big\},
\end{align}
which is the numerator in the fraction \eqref{m5}, is non-positive for $\tilde{\kappa} > 1$.
To do this it suffices to show that the zeroth, first and second 
derivatives are non-positive.
We have $w(1) = 2(\alpha-1-\rho_0)(1+\rho_0) \le 0$,
$\partial_{\tilde{\kappa}}w(1) = 2(\alpha-2)\rho_0(1+\rho_0)^2 \le 0$,
and 
\mbox{$\partial_{\tilde{\kappa}}^2w(1) = 
2(\alpha-2)\rho_0^2(2+3\rho_0) \le 0$}.
This implies that $w(\tilde{\kappa}) \le 0$ for $\tilde{\kappa} > 1$ and
consequently $\eqref{m5} \le 0$. This concludes the proof for 
$\rho_0 \ge 1$.

For $\rho_0 < 1$ the analogous
expressions to \eqref{minors1} are slightly different but the
previous argument holds in exactly the same manner, 
thus we omit the details.
\end{proof}

In the following corollary, we allow $\kappa =1$ but $0 < \alpha < 2$.
It gives an explicit bound on the convergence rate of over-relaxed ADMM.
\begin{corollary}\label{upper_bound_admm}
Consider the sequences 
$\left\{ x_t \right\}$,
$\left\{ z_t \right\}$, and
$\left\{ u_t \right\}$, updated according to \algoref{ADMM} with
step size $\rho > 0$, relaxation parameter
$0 < \alpha < 2$ and for a problem with $\kappa \geq 1$. 
Let $\varphi_t = \left[z_t, u_t\right]^{T}$
and $\varphi_{*}$ be a fixed point. Then the convergence rate of
over-relaxed ADMM obeys the following upper bound:
\begin{equation}\label{convergence}
\| \varphi_t - \varphi_* \| \le \kappa_B \sqrt{\chi(\eta)} \, \tau^t
\| \varphi_0 - \varphi_* \|
\end{equation}
with $\tau$ explicitly given by the formula \eqref{tau_sol} and
\begin{equation}\label{eta_rate}
\eta = \dfrac{\alpha}{2-\alpha} \cdot
\dfrac{\chi(\rho_0)\sqrt{\kappa}-1}{\chi(\rho_0)\sqrt{\kappa}+1}.
\end{equation}
\end{corollary}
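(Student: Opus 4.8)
The plan is to read \corref{upper_bound_admm} as a repackaging of \thref{explicit_solution} together with \thref{jordan_theo}: feed the explicit feasible point of \eqref{semidefinite} into \thref{jordan_theo} and then read off the multiplicative constant. So I would fix $0<\alpha<2$, $\rho_0>0$ and treat first the case $\kappa>1$. By \thref{explicit_solution}, the quadruple $(P,\lambda_1,\lambda_2,\tau)$ of \eqref{P_sol}--\eqref{tau_sol} is feasible for \eqref{semidefinite} with $P\succ 0$, $\lambda_1,\lambda_2\ge 0$ and $0<\tau<1$, and $\tau$ is exactly \eqref{tau_sol}. Hence \thref{jordan_theo} applies and gives $\|\varphi_t-\varphi_*\|\le\kappa_B\sqrt{\kappa_P}\,\tau^t$ for all $t\ge 0$; the only thing left is to evaluate $\kappa_P$ for this particular $P$.

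Since $P=\left(\begin{smallmatrix}1&\xi\\\xi&1\end{smallmatrix}\right)$, its eigenvalues are $1+\xi$ and $1-\xi$, so $\kappa_P=\chi\!\big((1+\xi)/(1-\xi)\big)$ once both are shown positive. Writing $c=\chi(\rho_0)\sqrt{\kappa}$, a direct simplification of \eqref{P_sol} gives $1+\xi=\dfrac{\alpha(c-1)}{1-\alpha+c}$ and $1-\xi=\dfrac{(2-\alpha)(1+c)}{1-\alpha+c}$. The common denominator is positive because $c\ge 1>\alpha-1$; the first numerator is positive because $c\ge\sqrt{\kappa}>1$; and the second numerator is positive precisely because $\alpha<2$. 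This last point is exactly where the corollary's restriction to $\alpha<2$ (rather than the $\alpha\le 2$ of \thref{explicit_solution}) is used: at $\alpha=2$ one would have $1-\xi=0$, $P$ only positive semidefinite and $\kappa_P$ infinite. Dividing the two displayed quantities yields $(1+\xi)/(1-\xi)=\tfrac{\alpha}{2-\alpha}\cdot\tfrac{c-1}{c+1}=\eta$, hence $\kappa_P=\chi(\eta)$, and the bound of \thref{jordan_theo} becomes \eqref{convergence}.

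It remains to allow $\kappa=1$, which \thref{explicit_solution} does not cover, since the factor $(\kappa-1)$ in the denominator of $\lambda_1$ in \eqref{l1_sol} diverges, and when in addition $\rho_0=1$ one has $\xi=-1$ and $P$ degenerates. I would dispose of this boundary case separately. One route is to build a feasible point of \eqref{semidefinite} at $\kappa=1$ by hand, using that the SDP simplifies there — at $\kappa=1$ the matrix $M_1$ equals $-2vv^{T}$ with $v=(\rho_0^{-1},-1)^{T}$, hence is negative semidefinite of rank one — so that keeping $P$ and $\lambda_2$ as in \eqref{P_sol}, \eqref{l2_sol} evaluated at $\kappa=1$ and taking $\lambda_1$ large enough reduces feasibility to a finite linear-algebra check of the remaining (fixed) part on the kernel of the $M_1$-term. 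A second route is a continuity argument in $\kappa\downarrow 1$, legitimized by the scale invariance $(P,\lambda_1,\lambda_2)\mapsto s(P,\lambda_1,\lambda_2)$ of \eqref{semidefinite}, which leaves both $\kappa_P$ and $\tau$ unchanged and removes the divergence after normalizing $P$; the remaining corner $\kappa=1$, $\rho_0=1$ (where $\eta=0$) is then the trivial one. I expect this $\kappa=1$ bookkeeping to be the only real obstacle — everything else is substitution into the two theorems and a $2\times2$ eigenvalue computation.
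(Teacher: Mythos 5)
Your treatment of the main case $\kappa>1$ is exactly the paper's argument: plug the feasible point of \thref{explicit_solution} into \thref{jordan_theo}, note that the eigenvalues of $P$ are $1\pm\xi$, and check that their ratio is $\chi(\eta)$ (your algebra $(1+\xi)/(1-\xi)=\tfrac{\alpha}{2-\alpha}\cdot\tfrac{c-1}{c+1}$ with $c=\chi(\rho_0)\sqrt{\kappa}$ is correct, as is the observation that $\alpha<2$ is what keeps $\kappa_P$ finite). The gap is in your handling of $\kappa=1$. Your second route does not work as stated: the scale invariance $(P,\lambda_1,\lambda_2)\mapsto s(P,\lambda_1,\lambda_2)$ leaves the ratio $\lambda_1/\lambda_{\min}(P)$ unchanged, and this ratio diverges as $\kappa\downarrow 1$ (the numerator of \eqref{l1_sol} tends to $\alpha\rho_0(1-\alpha+\chi(\rho_0))>0$ while the denominator vanishes), so no normalization produces a convergent family of certificates with a positive definite limit $P$; the limit is $(P,\lambda_1,\lambda_2)\to(0,1,0)$ after normalizing $\lambda_1$, which is not feasible. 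Moreover, even a convergent certificate would only certify the SDP at the perturbed $\kappa'>1$, whereas \thref{jordan_theo} for the actual $\kappa=1$ problem requires feasibility of \eqref{semidefinite} with that problem's own $\kappa$. Your first route is also not a complete reduction: negative semidefiniteness of the fixed part restricted to the kernel of the rank-one $M_1$-term is \emph{not} sufficient for $X-2\lambda_1 uu^T\preceq 0$ at large $\lambda_1$ (take $X=\begin{pmatrix}0&1\\1&0\end{pmatrix}$, $u=e_1$: $X$ vanishes on $u^{\perp}$ yet $X-tuu^T$ is never NSD), so the ``finite linear-algebra check'' you defer to needs a stronger condition and has not been carried out; whether \eqref{semidefinite} is even feasible at $\kappa=1$ with $\tau$ as in \eqref{tau_sol} is never established by you (nor claimed by the paper).

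The paper closes the $\kappa=1$ case differently and without touching the SDP at $\kappa=1$: it perturbs the problem data, replacing $A$ by $A+\delta A$ with $\kappa_A>1$, so the perturbed problem has $\kappa>1$ and the already-proved bound applies; it then invokes continuity of the ADMM iterates (hence of $\|\varphi_t-\varphi_*\|$) in $A$ near an invertible $A$, together with continuity of $\eta$ and $\tau$ in $\kappa$, and lets $\delta A\to 0$. If you want to stay closer to your own framing, a clean alternative is a class-inclusion argument: since $f\in S_p(m,L)\subseteq S_p(m',L)$ for any $0<m'<m$, the same, unchanged problem and algorithm can be analyzed with parameters $\kappa'>1$ and the corresponding $\rho_0'$, and letting $m'\uparrow m$ in the resulting bound recovers \eqref{convergence} at $\kappa=1$; but as written, your $\kappa=1$ step is a genuine gap.
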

\begin{proof}
The proof for $\kappa > 1$ follows directly from \thref{jordan_theo} and
Theorem \ref{explicit_solution}. Indeed, all that we need to do is
to compute $\kappa_P$ in \eqref{bound_admm} for $P$ as in Theorem \ref{explicit_solution}. The two eigenvalues of $P$ are $1 - \xi$ and
$1 + \xi$ and the ratio of the largest to the smallest is 
precisely $\chi(\eta)$, where $\eta$ is given in equation \eqref{eta_rate}.

For  $\kappa = 1$ the proof follows by continuity. 
First notice that, from one iteration to the next in Algorithm \ref{ADMM},
$(x_{t+1},z_{t+1},u_{t+1})$ is a continuous function of $(x_{t},z_{t},u_{t},A)$
in a neighborhood of an invertible $A$, if we assume everything 
else fixed (this can be derived from the properties of proximal operators, c.f. \cite{ProximalBoyd}). 
Therefore, by the continuity of the composition of continuous functions, 
and assuming only $A$ is free and everything else is fixed,
$\| \varphi_t - \varphi_* \| = F(A)$ for some function that is continuous around a neighborhood of an invertible $A$.
Now, add a small perturbation $\delta A$ to $A$ such that $\kappa_A > 1$.
This perturbation makes $\kappa > 1$ and by the first part of this proof we can write that
$F(A + \delta A) \leq \kappa_B \sqrt{\chi(\eta + \delta \eta)} (\tau + \delta 
\tau)^t$, where $\delta \eta$ and $\delta  \tau$ are themselves continuous functions of $\delta A$ since both $\eta$ and $\tau$ depend continuously on $\kappa$ which in turn depends continuously on $\delta A$, around an invertible $A$. The theorem follows by letting $\delta A \rightarrow 0$
and using the fact that $\lim_{ \substack{\delta A \rightarrow 0\\ \kappa_A > 1}} F(A + \delta A) = F(A)$.
\end{proof}

The next result complements Theorem \ref{explicit_solution} by showing
that the rate bound in equation \eqref{tau_sol} is
the smallest one can get from the feasibility problem in \thref{jordan_theo}.
\begin{theorem}\label{optimal_solution}
If $0 < \alpha < 2$, $\rho_0 > 0$ and $\kappa \geq 1$,
then the smallest $\tau$ for which one can find a feasible point
of \eqref{semidefinite} is given by \eqref{tau_sol}.
\end{theorem}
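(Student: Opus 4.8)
The plan is to show that no feasible $P \succ 0$, $\lambda_1, \lambda_2 \ge 0$ can exist for any $\tau$ strictly smaller than the value in \eqref{tau_sol}; combined with Theorem \ref{explicit_solution}, which exhibits feasibility at exactly that $\tau$, this pins down the optimum. First I would reduce to the case $\chi(\rho_0) = \rho_0$ (i.e.\ $\rho_0 \ge 1$); the case $\rho_0 < 1$ should follow by an analogous computation or by a symmetry of the SDP under $\rho_0 \mapsto \rho_0^{-1}$ visible in $M_1$. I would also first treat $\kappa > 1$ and recover $\kappa = 1$ at the end by a continuity argument mirroring the one in the proof of Corollary \ref{upper_bound_admm}.

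The core of the argument is to extract necessary conditions from the negative semidefiniteness in \eqref{semidefinite}. Write the left-hand side as $Q(\tau, P, \lambda_1, \lambda_2)$, a symmetric $4\times 4$ matrix whose entries are explicit in the parameters (after substituting the fixed $\hA,\hB,\hC_i,\hD_i$). Since $Q \preceq 0$, every principal submatrix is negative semidefinite; in particular every $1\times 1$ diagonal entry is $\le 0$ and every $2\times 2$ principal minor is $\ge 0$. The strategy is to pick a small, well-chosen collection of these scalar inequalities — the $(1,1)$ entry $\le 0$, the $(2,2)$ entry $\le 0$, maybe the $(3,3)$ entry, plus one or two $2\times 2$ minors — that together force $\tau \ge 1 - \alpha/(1+\rho_0\sqrt\kappa)$ regardless of the free choices of $P$ and $\lambda_i$. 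Concretely, I expect that eliminating $\lambda_1$ between the $(1,1)$ and $(1,3)$ or $(2,3)$ relations, and eliminating the entries of $P$ using positive-definiteness constraints (the off-diagonal entry $\xi$ must satisfy $|\xi| < 1$, or more precisely the $2\times2$ minor condition), yields after simplification a bound of the form $\tau^2 \ge (\text{something})$ whose square root is exactly \eqref{tau_sol}. One clean route: the $3\times 3$ block obtained by deleting row/column $4$ already captures the essential constraints (the $\hat C,\hat D$ terms involving $\lambda_1$ all live there), and one should extract from that $3\times3$ block a lower bound on $\tau$ that does not reference the unknowns.

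The main obstacle will be the parametric optimization: after writing down the handful of scalar inequalities, one must verify that the resulting feasible region in $\tau$ (after projecting out $P$ and $\lambda_1,\lambda_2$) is exactly $[1 - \alpha/(1+\rho_0\sqrt\kappa),\, 1)$ and not something larger. This is essentially a small nonlinear elimination problem — using the structure that $\lambda_1$ appears linearly in each entry and $\tau^2$ appears linearly in the diagonal of the $\hA^TP\hA - \tau^2 P$ block — and the risk is that the "obvious" subset of minors gives a weaker bound, forcing one to combine more of them (e.g.\ invoking a $2\times2$ minor to control the sign of a cross term). A helpful sanity check is that Theorem \ref{explicit_solution} tells us which inequalities are \emph{tight} at the optimum (those are exactly the principal minors that vanish in \eqref{big_matrix}), so I would read off from \eqref{big_matrix} which minors are active and base the necessity argument on precisely those, running the implications in reverse: active constraints at the claimed optimum are the ones that must obstruct any improvement.
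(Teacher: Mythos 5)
Your plan attacks the problem head-on: show algebraically that the SDP \eqref{semidefinite} is infeasible for every $\tau$ below \eqref{tau_sol}, by extracting principal-minor inequalities and eliminating $P$, $\lambda_1$, $\lambda_2$. This is a genuinely different route from the paper, but as written it has a real gap: the entire difficulty of that route is the elimination step, and you only conjecture that ``a small, well-chosen collection'' of minors will force $\tau \ge 1-\alpha/(1+\chi(\rho_0)\sqrt{\kappa})$, while yourself flagging the risk that it yields a weaker bound. Note that the certificate ranges over \emph{all} $P \succ 0$ (two independent diagonal entries plus the off-diagonal one, with only an overall scaling removable by homogeneity) and both multipliers, so this is a multi-parameter quantifier-elimination problem; and reading off which minors vanish at the particular feasible point of Theorem~\ref{explicit_solution} does not show that those same constraints obstruct every \emph{other} candidate $(P,\lambda_1,\lambda_2)$ at a smaller $\tau$ --- a different $P$ could make a different set of constraints active. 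Until that elimination is actually carried out and verified, the necessity direction is unproven.

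The paper avoids this algebra entirely by using the \emph{meaning} of the SDP rather than its entries: by Theorem~\ref{jordan_theo}, any feasible $\tau$ certifies the rate $C\tau^t$ for every problem satisfying Assumption~\ref{assumption} with the given $\kappa$. It then exhibits one concrete instance --- $f(x)=\tfrac12 x^TQx$ with $Q=\mathrm{diag}([m,L])$, $g=0$, $B=-I$, $c=0$, and $A=I$ (or $A=\rho_0 I$ when $\rho_0<1$, which is how the paper handles the small-$\rho_0$ case, rather than by a symmetry of $M_1$) --- for which the ADMM iteration reduces to $z_{t+1}=(I-\alpha(Q+\rho I)^{-1}Q)z_t$ and, starting along the smallest eigenvalue of $Q$, converges at exactly the rate $1-\alpha/(1+\chi(\rho_0)\sqrt{\kappa})$. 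Feasibility with any smaller $\tau$ would contradict this trajectory, so no computation with the $4\times4$ matrix is needed. If you want to salvage your direct approach, you would need to actually complete the parametric elimination (or supply an SDP-duality certificate of infeasibility for $\tau<\nu$); otherwise the counterexample-instance argument is both shorter and already sufficient.
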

\begin{proof}
The proof will follow by contradiction. Our counterexample follows \cite{Jordan} and \cite{Ghadimi}.
Assume that for some $0 < \alpha < 2$, $\rho_0 > 0$ and $\kappa \geq 1$
it is possible to find a feasible solution with $\tau < \nu = 1 - \frac{\alpha}{1 + \chi(\rho_0)\sqrt{\kappa}}$. Then, if we use ADMM with this $\alpha$
and a $\rho = \rho_0 \sqrt{\hat{m}\hat{L}}$ to solve any optimization problem
with this same value of $\kappa$ and satisfying Assumption~\ref{assumption} 
we have by \thref{jordan_theo} that
$\| \varphi_t - \varphi_* \| \leq C \tau^t$, where
$\tau < \nu$ and $C>0$ is some constant.
In particular, if $\rho_0 \geq 1$, this bound on the error rate must hold if we try to solve a problem where
$f(x) = \tfrac{1}{2}x^T Q x$
and $g(z) = 0$, with
$Q = \text{diag}([m,L]) \in \mathbb{R}^{2\times2}$, $A=I$, $B=-I$, and $c=0$.
Note that for this problem $\kappa_A = 1$, $\kappa = \kappa_f = L/m$, $\hat{m}  = m$ and $\hat{L} = L$.

Applying \algoref{ADMM} to this problem yields
\begin{equation}
z_{t+1} = \left(I - \alpha (Q + I \rho)^{-1} Q \right) z_t.
\end{equation}
If $z_{t=0}$ is in the direction of the smallest eigenvalue
of $Q$, the error rate for $z_t$ is
%
%\vspace{-0.2cm}
%
\begin{equation}
1 - \frac{\alpha}{1 + \rho m^{-1}} = 1 - \frac{\alpha}{1 + \rho_0 \sqrt{\kappa}},
\end{equation}
%
%\vspace{-0.0cm}
%
where in the second equality we replaced \eqref{parameters}.
But this means that the error rate for $\| \varphi_t - \varphi_* \|$ cannot be bounded by
$\tau < \nu$ for $\rho_0 \geq 1$, which contradicts our original assumption.

The proof when $\rho_0 < 1$ is similar. 
We apply ADMM to the same problem as above but now with $A = \rho_0 I$ and the rest the same. Note that for this modified problem $\kappa_A = 1$, $\kappa = \kappa_f = L/m$, $\hat{m}  = m/\rho_0^2$, $\hat{L} = L\rho_0^2$ and the $\rho$ we choose for ADMM is now $\rho = \sqrt{L m} / \rho_0$ (while before it was
 $\rho = \rho_0 \sqrt{L m}$).
\end{proof}

%Note that Theorem \ref{optimal_solution} also tells us that among all bounds that are a function of $\kappa, \alpha$ and $\rho_0$, it is not possible to get one that is better than $1 - \frac{\alpha}{1 + \chi(\rho_0)\sqrt{k}}$.

Now we compare the rate bound of 
ADMM with the rate bound of gradient descent (GD) when
we solve problem \eqref{minimize} with $B = I$. In what follows we
use $\tau_{\text{ADMM}}$ and $\tau_{\text{GD}}$ when talking about
rates of convergence for ADMM and GD, respectively.

Before we state our result let us discuss how  GD behaves when
we use it to solve this problem. To solve problem \eqref{minimize}
using GD with $B = I$ we reduce the problem to an unconstrained 
formulation by applying GD to the
function $F(z) = \tilde{f}(z) + g(z)$ where  $\tilde{f}(z) =  f(A^{-1}(c - z))$.
We now assume that $F \in S_{p}(m_F, L_F)$ for some $0 < m_F \leq L_F < \infty$. The work of \cite{Nesterovnotes} gives an optimally tuned rate bound
for GD when applied to any objective function in $S_p(m_F,L_F)$. This rate is
$1 - \frac{2}{1 + \kappa_F}$ where $\kappa_F = L_F / m_F$. It is easy to see that,
among all general bounds that only depend on $\kappa_F$, it is not
possible to get a function smaller than this. Indeed, if the objective function is
$x^T \text{diag}([m_F,L_F]) x$ then the rate of convergence of GD with step size $\beta$ is given by the spectral radius of the matrix
$I - \beta \text{diag}(\{m_F,L_F\})$ which is $\max\{ |1- \beta L_F|,|1- \beta m_F|\}$
and which in turn has minimum value $1 - \frac{2}{1 + \kappa_F}$ for $\beta = 2/ (L_F + m_F)$.
If $\mathcal{P}(\kappa_F)$ is the family of this unconstrained formulation of problem \eqref{minimize} with $B = I$ and $L_F/m_F = \kappa_F$, then we can summarize what we describe above as
%
%\vspace{-2mm}
%
\begin{equation}\label{GD_best_bound}
\inf_{\beta} \sup_{\mathcal{P}(\kappa_F)} \tau_{\text{GD}} = 1 - \frac{2}{1 + \kappa_F}.
\end{equation}
%
%\vspace{-2mm}
%

In a similar way, if $\mathcal{P}(\kappa)$ is the family of problems
of the form \eqref{minimize} with $B = I$, to be solved using Algorithm \ref{ADMM} under Assumption \ref{assumption}, where $f \in S_p(m,L)$ and $\kappa = L/m$, then Corollary \ref{upper_bound_admm} and the counterexample in the proof of Theorem \ref{optimal_solution} give us that
\vspace{-0.25cm}
\begin{align}\label{ADMM_best_bound}
&\inf_{\alpha, \rho_0} \sup_{\mathcal{P}(\kappa)} \tau_{\text{ADMM}}
\leq \inf_{\alpha>2, \rho_0} \sup_{\mathcal{P}(\kappa)} \tau_{\text{ADMM}} = 
1 - \frac{2}{1 + \sqrt{\kappa}},
\end{align}
%
%\vspace{-0.0cm}
%
where the last equality is obtained by setting $\alpha =2$ and $\rho_0 = 1$
in equation \eqref{tau_sol}.

The next theorem shows that the optimally tuned ADMM for worse-case
problems has faster convergence rate than the optimally tuned GD for worse-case problems.
\begin{theorem}
Let $\mathcal{P}(\kappa_F,\kappa)$ be the family of problems
\eqref{minimize} with $B=I$ and under Assumption \ref{assumption} such that $f \in S_p(m,L)$ with $L/m = \kappa$ and $F \in S_p(m_F,L_F)$ with $L_F/m_F = \kappa_F$, then
\vspace{-0.1cm}
\begin{equation}
\tau^*_{\text{ADMM}} \equiv \inf_{\alpha, \rho_0} \sup_{\mathcal{P}(\kappa_F,\kappa)} \tau_{\text{ADMM}}
\leq \tau^*_{\text{GD}} \equiv \inf_{\beta} \sup_{\mathcal{P}(\kappa_F,\kappa)} \tau_{\text{GD}}.
\end{equation}
More specifically,
%
%\begin{equation}\label{eq:more_spec_theorem_ADMM_VS_GD}
%\tau^*_{\text{GD}} \geq 1 - 2/ \left(1 + ((1 + \tau^*_{\text{ADMM}})/(1 - \tau^*_{\text{ADMM}}))^2\right).
%\end{equation}
\begin{equation}\label{eq:more_spec_theorem_ADMM_VS_GD}
\tau^*_{\text{GD}} \geq \dfrac{2\tau^\star_{ADMM}}{1+(\tau^\star_{ADMM})^2}.
\end{equation}
\end{theorem}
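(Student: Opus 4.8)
The plan is to reduce the claimed inequality to a comparison between two explicit rate expressions, one for the GD and one for the ADMM, both parametrized by $\kappa$ and $\kappa_F$. First I would recall from \eqref{GD_best_bound} that for the unconstrained formulation $F(z)=\tilde f(z)+g(z)$ with $\tilde f(z)=f(A^{-1}(c-z))$ one has $\tau^*_{\text{GD}}=1-\tfrac{2}{1+\kappa_F}$, and from \eqref{ADMM_best_bound} that $\tau^*_{\text{ADMM}}\le 1-\tfrac{2}{1+\sqrt\kappa}$. The key structural observation is that $\kappa_F$ and $\kappa$ are not independent: since $\tilde f(z)=f(A^{-1}(c-z))$, the curvature bounds of $\tilde f$ are governed by $m/\sigma_1^2(A)=\hat m$ and $L/\sigma_p^2(A)=\hat L$, so $\kappa_{\tilde f}=\hat L/\hat m=\kappa_f\kappa_A^2=\kappa$. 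Adding $g$ (which is only convex, $g\in S_q(0,\infty)$) can only make the condition number worse, so $\kappa_F\ge\kappa$ — equivalently, over the family $\mathcal P(\kappa_F,\kappa)$ the relevant comparison is between a GD rate depending on $\kappa_F$ and an ADMM rate depending on $\kappa\le\kappa_F$. I would make this monotonicity precise: $1-\tfrac{2}{1+\kappa_F}$ is increasing in $\kappa_F$, so $\tau^*_{\text{GD}}=1-\tfrac{2}{1+\kappa_F}\ge 1-\tfrac{2}{1+\kappa}$.

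Next I would prove the sharper quantitative statement \eqref{eq:more_spec_theorem_ADMM_VS_GD}. Write $\tau^\star_{ADMM}$ for the ADMM bound; the cleanest route is to show that for any $s\ge 1$,
\begin{equation*}
1-\frac{2}{1+s^2}\ \ge\ \frac{2\left(1-\tfrac{2}{1+s}\right)}{1+\left(1-\tfrac{2}{1+s}\right)^2},
\end{equation*}
and then substitute $s=\sqrt\kappa$ on the left (using $\kappa_F\ge\kappa$ and monotonicity to get $\tau^*_{\text{GD}}\ge 1-\tfrac{2}{1+\kappa}=1-\tfrac{2}{1+s^2}$) and $\tau^\star_{ADMM}\le 1-\tfrac{2}{1+s}$ on the right (noting that the map $t\mapsto \tfrac{2t}{1+t^2}$ is increasing on $[0,1]$, so an upper bound on $\tau^\star_{ADMM}$ yields an upper bound on the right-hand side). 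Setting $t=1-\tfrac{2}{1+s}=\tfrac{s-1}{s+1}$, a direct computation gives $\tfrac{2t}{1+t^2}=\tfrac{(s-1)(s+1)}{s^2+1}=\tfrac{s^2-1}{s^2+1}=1-\tfrac{2}{1+s^2}$, so in fact equality holds identically. Hence \eqref{eq:more_spec_theorem_ADMM_VS_GD} follows from $\tau^*_{\text{GD}}\ge 1-\tfrac{2}{1+\kappa}$ together with the elementary identity $t\mapsto\tfrac{2t}{1+t^2}$ applied at $t=\tfrac{\sqrt\kappa-1}{\sqrt\kappa+1}$, and the first inequality of the theorem is the special case obtained by comparing $\tau^*_{\text{ADMM}}\le 1-\tfrac{2}{1+\sqrt\kappa}\le 1-\tfrac{2}{1+\kappa}\le\tau^*_{\text{GD}}$ whenever $\kappa\ge 1$ (using $\sqrt\kappa\le\kappa$).

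I expect the main obstacle to be the step establishing $\kappa_F\ge\kappa$ rigorously — that is, arguing carefully that the strong-convexity and Lipschitz-gradient constants of $F=\tilde f+g$ satisfy $m_F\le\hat m$ and $L_F\ge\hat L$, or at least $L_F/m_F\ge\hat L/\hat m=\kappa$. The upper direction is immediate since $\nabla^2 F=\nabla^2\tilde f+\nabla^2 g\preceq \hat L I+\nabla^2 g$ is not bounded above in general, but when $g$ is, say, an indicator or has bounded curvature the argument is that $g$'s Hessian adds a positive semidefinite term, pushing $L_F$ up and $m_F$ down relative to the $[\hat m,\hat L]$ interval of $\tilde f$ alone; I would phrase this at the level of the monotone-operator inequalities in Assumption \ref{assumption} rather than Hessians, to cover the nonsmooth case. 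Once $\kappa_F\ge\kappa$ is in hand, everything else is the short algebraic identity above and the monotonicity of the two scalar functions involved, so the bulk of the write-up is simply bookkeeping the substitutions $s=\sqrt\kappa$, $t=(\sqrt\kappa-1)/(\sqrt\kappa+1)$.
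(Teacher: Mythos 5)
Your proposal follows essentially the same route as the paper: it reduces both sides to the explicit worst-case rates $1-\tfrac{2}{1+\kappa_F}$ for GD and $1-\tfrac{2}{1+\sqrt{\kappa}}$ for ADMM, invokes $\kappa_F\ge\kappa$ coming from the structure $F=\tilde f+g$, chains the inequalities, and gets \eqref{eq:more_spec_theorem_ADMM_VS_GD} from the monotonicity of $t\mapsto \tfrac{2t}{1+t^2}$ on $[0,1]$ together with the identity $\tfrac{2t}{1+t^2}=1-\tfrac{2}{1+\kappa}$ at $t=\tfrac{\sqrt{\kappa}-1}{\sqrt{\kappa}+1}$, which is exactly the paper's (tersely stated) argument, spelled out more explicitly. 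The only caveat is your heuristic justification of $\kappa_F\ge\kappa$ (``adding $g$ can only make the condition number worse'', ``pushing $m_F$ down''): adding a convex $g$ can only raise the lower curvature, and a strongly convex $g$ can actually improve the conditioning of $F$, so the claim really rests on the worst-case member of the family having $m_g=0$; the paper handles this step with the same level of informality via its ``without loss of generality'' choice of constants, so this does not constitute a divergence from the paper's proof.
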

\begin{proof}
First notice that \eqref{GD_best_bound} still holds if $\mathcal{P}(\kappa_F)$ is replaced by $\mathcal{P}(\kappa_F,\kappa)$ since the objective function used in the example given above \eqref{GD_best_bound} is also in $\mathcal{P}(\kappa_F,\kappa)$.

Second notice that, since $f \in S_p(m,L)$  and $A$ is non-singular, we have 
$\tilde{f} \in S_p(m_{\tilde{f}},L_{\tilde{f}})$ for some
\mbox{$0 < m_{\tilde{f}} \leq L_{\tilde{f}} < \infty$}. Thus, since $F = \tilde{f} + g \in S_p(m_F,L_F)$, we have $g \in S_p(m_g,L_g)$ for some
$0 \leq m_g \leq L_g < \infty$ (it might be that $m_g = 0$, i.e. $g$ might not be strictly convex). Notice in addition that, without loss of generality, we can  assume that $L_F \geq L_{\tilde{f}} + m_g$, $m_F \leq m_{\tilde{f}} + m_g$, $L_{\tilde{f}} = (\sigma_1(A^{-1}))^2 L_f$ and $m_{\tilde{f}} = (\sigma_p(A^{-1}))^2 m_f$.
Therefore, if $F \in S_p(m_F, L_F)$ and $f \in S_p(m,L)$ then 
\begin{multline}
\kappa_F = \frac{L_F}{m_K} \geq \frac{L_{\tilde{f}} + m_g}{m_{\tilde{f}} + m_g}
\geq \frac{L_{\tilde{f}}}{m_{\tilde{f}} } = \frac{L_{{f}} (\sigma_1(A^{-1}))^2}{m_{{f}} (\sigma_p(A^{-1}))^2} \\
= \kappa_f (\kappa_{A^{-1}})^2 = \kappa_f (\kappa_{A})^2 = \kappa.
\end{multline}

Finally, using the fact $\kappa_F \geq \kappa$ and equations 
\eqref{GD_best_bound} and \eqref{ADMM_best_bound}, we can write
$\inf_{\alpha, \rho_0} \sup_{\mathcal{P}(\kappa_F,\kappa)} \tau_{\text{ADMM}} \leq \inf_{\alpha, \rho_0} \sup_{\mathcal{P}(\kappa)} \tau_{\text{ADMM}}\leq
1 - \frac{2}{1 + \sqrt{\kappa}} 
\leq 1 - \frac{2}{1 + \kappa} \leq 1 - \frac{2}{1 + \kappa_F} = \inf_{\beta}
\sup_{\mathcal{P}(\kappa_F,\kappa)} \tau_{\text{GD}}$.
Thus \eqref{eq:more_spec_theorem_ADMM_VS_GD} follows from the inequalities
$\kappa_F \geq \kappa$ and  
\mbox{$1 - \frac{2}{1 + \sqrt{\kappa}} \leq 1 - \frac{2}{1 + \kappa_F} $}.
\end{proof}
%

%%%%%%%%%%%%%%%%%%%%%%%%%%%%%%%%%%%%%%%%%%%%%%%%%%%%%%%%%%%%%%%%%%%%%%%%%%%%%%%
\section{Numerical Results}

We now compare numerical solutions to the SDP in \thref{jordan_theo} 
with the exact formulas from \thref{explicit_solution}. 
The numerical procedure was implemented in MATLAB using CVX and a binary search
to find the minimal $\tau$ such that \eqref{semidefinite} is feasible.
This is exactly the same procedure described in \cite{Jordan} and it works
because the maximum eigenvalue of \eqref{semidefinite} decrease monotonically with $\tau$.
\figref{tau_numerical} shows the rate bound $\tau$ against $\kappa$ 
for several choices of parameters $(\alpha,\,\rho_0)$. 
The dots correspond to the numerical solutions
and the solid lines correspond to the exact formula \eqref{tau_sol}. 
\figref{lambdas_numerical} compare
the numerical values of $\lambda_1$ (circles) and $\lambda_2$ (squares)
with the formulas \eqref{l1_sol} and \eqref{l2_sol} (solid lines).
There is a perfect agreement between \mbox{\eqref{P_sol}--\eqref{tau_sol}} and 
the numerical results, which strongly
support \thref{explicit_solution} and \thref{optimal_solution}.

The range \mbox{$0<\alpha<1$} give worse convergence
rates compared to $1 \le \alpha < 2$.
The best rate bound is 
attained with $\rho_0=1$, or equivalently $\rho=\sqrt{\hat{m}\hat{L}}$, and
\mbox{$\alpha = 2$}. This is also evident from \eqref{tau_sol}.
Note, however, that \eqref{eta_rate} diverges when $\alpha\to2$
so although the optimal rate bound, in the asymptotical sense, is $1 - \frac{2}{1 + \chi(\rho_0)\sqrt{\kappa}}$, bound \eqref{convergence} suggests that in a
practical setting with a maximum number of iterations it might be better to
choose $\alpha < 2$.

\begin{figure}
\centering
\includegraphics[width=.8\linewidth]{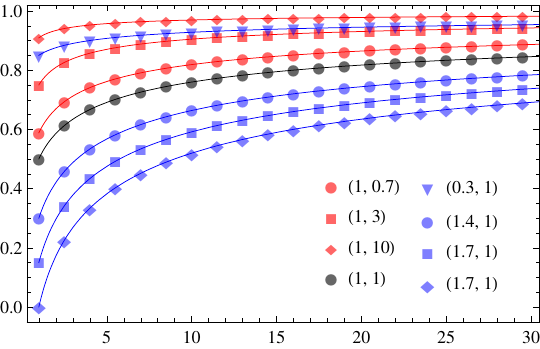}
\put(-100,-7){\small $\kappa$}
\put(-210,70){\small $\tau$}
\put(-60,68){\footnotesize $(\alpha,\rho_0)$}
\caption{ \label{tau_numerical}
Plot of $\tau$ versus $\kappa$ for different values of
parameters $(\alpha,\rho_0)$, as indicated in the legend. 
The dots correspond to the numerical solution to
\eqref{semidefinite} while the solid curves are the exact formula 
\eqref{tau_sol}.
The best choice of parameters are $\rho_0=1$ and $\alpha=2$. The convergence
rate is improved with the choice $1 \le \alpha < 2$ compared to $0<\alpha<1$. 
}
\end{figure}
\begin{figure}
\centering
\includegraphics[width=.8\linewidth]{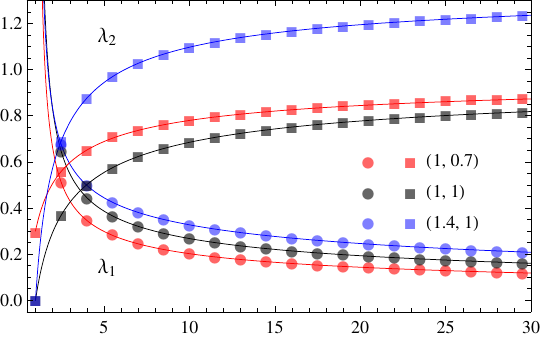}
\put(-100,-7){\small $\kappa$}
\put(-226,70){\small $\lambda_1,\lambda_2$}
\put(-60,72){\footnotesize $(\alpha,\rho_0)$}
\caption{
We show $\lambda_1$ (circles) and $\lambda_2$ (squares) 
verus $\kappa$ for some of the choices of parameters
$(\alpha, \rho_0)$ in \figref{tau_numerical}. Note the exact match 
of numerical results with formulas \eqref{l1_sol} and \eqref{l2_sol} 
(solid lines).
}
\label{lambdas_numerical}
\end{figure}

Corollary \ref{upper_bound_admm} is valid only
for $0<\alpha <  2$ (for $\alpha > 2$, \eqref{tau_sol} can 
assume negative values). However,  
\thref{jordan_theo} does not impose any
restriction on $\alpha$ and holds even for $\alpha > 2$
\cite{Jordan}. 
To explore the range $\alpha > 2$
we numerically solve \eqref{semidefinite} as  shown in \figref{alpha_bigger}. 
\begin{figure}
\centering
\includegraphics[width=.8\linewidth]{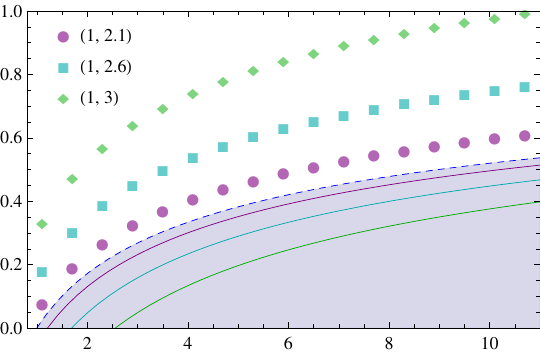}
\put(-100,-7){\small $\kappa$}
\put(-210,70){\small $\tau$}
\put(-143,105){\footnotesize $(\alpha,\rho_0)$}
\caption{Plot of $\tau$ versus $\kappa$ for some values of $(\alpha,\rho_0)$
with $\alpha>2$ and $\rho_0=1$.
The dashed blue line corresponds to $\alpha=2$ in
formula \eqref{tau_sol}. The shaded region contains curves given by
\eqref{tau_sol} for values of $\alpha$ not allowed in \thref{explicit_solution}.
These numerical solutions with $\alpha > 2$ had to be restricted to a range
$1 < \kappa \lesssim 11$. Moreover, notice that $\alpha > 2$ does not produce
better convergence rates than $1\le\alpha<2$ through \eqref{tau_sol}, which 
is valid for any $\kappa > 1$.}
\label{alpha_bigger}
\end{figure}
The dots correspond to the numerical solutions.
The dashed blue line corresponds to \eqref{tau_sol} with
$\alpha=2$, and it is the boundary of the shaded region in which
\eqref{tau_sol} can have negative values and is no longer
valid. Although \thref{explicit_solution} does not hold for $\alpha>2$, 
we deliberately 
included the solid lines representing \eqref{tau_sol} inside
this region.  Obviously, these curves do not match the numerical results. 

The first important remark is that, for a given $\alpha > 2$, we were  
unable to numerically find solutions for arbitrary $\kappa \geq 1$. 
For instance, for $\alpha=2.6$ we can only stay roughly on
the interval $1<\kappa \lesssim11$. 
The same behavior occurs for any \mbox{$\alpha > 2$}, and the range of
$\kappa$ becomes narrower as $\alpha$ increases.
From the picture one can notice that $\tau = 1$ is actually attained with 
\emph{finite} $\kappa$, while for \eqref{tau_sol} this never happens;
it rather approaches $\tau\to 1^{-}$ as $\kappa \to \infty$.
Therefore, although it is feasible to solve \eqref{semidefinite}
with $\alpha > 2$, the solutions will be constrained to a small range
of $\kappa$. The next question would be if \thref{jordan_theo} for
$\alpha > 2$ could possibly
give a better rate bound than Corollary \ref{upper_bound_admm} with $1 \le \alpha < 2$. We 
can see from the picture that this is probably not the case.
We conclude that, as far as solutions to \eqref{semidefinite} are considered, 
there is no advantage in considering $\alpha > 2$ compared to 
\eqref{tau_sol} with $1 \le \alpha < 2$, and 
which holds for arbitrary $\kappa > 1$.
It is an interesting problem to determine if proof techniques other than
\cite{Lessard} can lead to good rate bounds for $\alpha > 2$.

%%%%%%%%%%%%%%%%%%%%%%%%%%%%%%%%%%%%%%%%%%%%%%%%%%%%%%%%%%%%%%%%%%%%%%%%%%%%%%%
\section{Conclusion}

We introduced a new explicit rate bound for the
entire family of over-relaxed ADMM.
Our bound is the first of its kind and improves 
on \cite{Jordan} and \cite{wei_almost_bound}.
In particular, the only explicit bound in \cite{Jordan} is a special
case of our general explicit formula when $\kappa$ is large.
We also show that our bound is the best one can extract from the
integral quadratic constraints framework of \cite{Lessard}. 
In \cite{nesterov2004introductory} we find that $1 - 2 / (1 + \sqrt{\kappa})$
bounds the convergence rate of any first order method on $S(m,L)$, $\kappa =
m/L$, so we have also shown that ADMM with $\alpha \rightarrow 2$ is close to being optimal on $S(m,L)$.

Although our analysis assumes that $f$ is strongly convex,
we can use a very-slightly modified ADMM algorithm to solve problem
\ref{minimize} when $f$ is weakly convex using an idea of Elad Hazan
explained in \cite{Lessard} (Section 5.4).

%\vspace{-0.1cm}

% use section* for acknowledgment
%\section*{Acknowledgment}

%The authors would like to thank...

% trigger a \newpage just before the given reference
% number - used to balance the columns on the last page
% adjust value as needed - may need to be readjusted if
% the document is modified later
%\IEEEtriggeratref{8}
% The "triggered" command can be changed if desired:
%\IEEEtriggercmd{\enlargethispage{-5in}}

% references section

% can use a bibliography generated by BibTeX as a .bbl file
% BibTeX documentation can be easily obtained at:
% http://mirror.ctan.org/biblio/bibtex/contrib/doc/
% The IEEEtran BibTeX style support page is at:
% http://www.michaelshell.org/tex/ieeetran/bibtex/
%\bibliographystyle{IEEEtran}
% argument is your BibTeX string definitions and bibliography database(s)
%\bibliography{IEEEabrv,../bib/paper}
%
% <OR> manually copy in the resultant .bbl file
% set second argument of \begin to the number of references
% (used to reserve space for the reference number labels box)

\end{document}